\documentclass{article}

\usepackage[preprint,main]{neurips_2026}
\usepackage[utf8]{inputenc} 
\usepackage[T1]{fontenc}    
\usepackage{hyperref}       
\usepackage{url}            
\usepackage{booktabs}       
\usepackage{amsfonts}       
\usepackage{nicefrac}       
\usepackage{microtype}      

\usepackage{multicol}
\usepackage{amsmath, amsfonts, amssymb, amsthm}
\usepackage{bm}
\usepackage[inline]{enumitem}
\usepackage{mathtools}
\usepackage{graphicx}
\usepackage{longtable,tabularx}
\usepackage{placeins} 
\usepackage{float}
\usepackage{multirow}
\usepackage{bbm}
\usepackage[table, dvipsnames]{xcolor}
\usepackage{threeparttable}
\usepackage{balance}
\usepackage[ruled,algo2e]{algorithm2e}
\usepackage{algpseudocode}
\usepackage{adjustbox}
\usepackage{booktabs}
\usepackage{bm}
\usepackage{etoolbox}
\usepackage{microtype}
\usepackage[title]{appendix}
\usepackage{units}
\usepackage{cleveref}
\usepackage{xspace}
\usepackage{tcolorbox}
\usepackage{wrapfig}
\usepackage{caption}
\usepackage{thmtools} 
\usepackage{thm-restate}
\usepackage{tensor}

\usepackage{pgfplots}
\usepackage{subcaption}
\pgfplotsset{compat=1.18}

\usepackage{booktabs}

\DeclareDocumentEnvironment{example}{}{\noindent\textbf{Running example:}\itshape}{}

\usepackage{ifthen}
\newboolean{include-notes}
\newboolean{include-new}
\newboolean{include-remove}
\setboolean{include-notes}{true}
\setboolean{include-new}{false}
\setboolean{include-remove}{false}

\usepackage[dvipsnames]{xcolor}
\usepackage[normalem]{ulem}
\providecommand{\justin}[1]{\ifthenelse{\boolean{include-notes}}{\textcolor{orange}{\textbf{Jaime:} #1}}{}}

\providecommand{\princeton}[1]{\ifthenelse{\boolean{include-notes}}{\textcolor{orange}{#1}}{}}

\providecommand{\p}[1]{\smallskip \noindent \textbf{{#1}.}}

\providecommand*{\algname}{{EpicWorldModel}\xspace}

\usepackage{graphicx}

\title{\algname: Exploration-driven Planning with Latent World Models}

\author{
Bowen Feng$^{1}$\thanks{Equal contribution.} \quad
Julian Ost$^{1}$\footnotemark[2] \quad
May Mei$^{1}$ \quad
Anirudha Majumdar$^{1}$ \quad
Felix Heide$^{1,2}$ \\[4pt]
$^{1}$Princeton University
\qquad
$^{2}$Torc Robotics
}

\begin{document}
\setcounter{footnote}{1}
\maketitle

\begin{abstract}
  Latent world models based on Joint-Embedding Predictive Architecture (JEPA) are deterministic by design. 
  While successful in fully observable scenarios, this paradigm breaks down when past observations and actions lead to multiple plausible future possibilities, e.g., due to occlusion.
  We introduce \textit{\algname}, a framework to train stochastic JEPAs for environments and tasks with inherent uncertainty under partially observability. 
  We jointly train the \textit{\algname} predictor with its latent representation space to directly predict multiple potential future states using a flow-matching objective, when the goal-relevant scene content is absent from the conditioning history.
  We show that flow predictive variance, motivated by its relation to an upper bound on predictive entropy, serves as a useful exploration guidance for planning. 
  By incorporating this uncertainty signal into Cross-Entropy Method (CEM)-based planning, our approach balances goal-reaching with exploration of uncertain regions where occluded goals are most likely to be located. 
  We demonstrate the effectiveness of \algname through a series of latent planning experiments with the best or on-par performance across tasks, showing up to $22\%$ empirical improvement in success rate over LeWorldModel.
\end{abstract}

\section{Introduction}
%
Learning world models for latent-space planning has seen increased applications in sequential decision-making, enabling agents to learn compact latent dynamics models and plan directly in the learned representation space~\citep{maes2026leworldmodel,zhou2025dinowm,bardes2023v,assran2025v}.
Unlike model-free end-to-end visuomotor policy learning, latent-space planning with world models does not directly learn a policy that maps observations to actions.
Instead, it learns to compress high-dimensional sensory observations, such as visual inputs, into a structured latent representation and uses a learned dynamics model to perform model-based planning within this shared latent space.

Joint-Embedding Predictive Architecture (JEPA) models have emerged as a promising family of models for this task. These methods learn predictive representations by mapping observations into an embedding space and predicting future or masked target embeddings from context embeddings, rather than reconstructing raw pixels~\citep{lecun2022path}.
This principle has been instantiated in image representation learning through I-JEPA~\citep{assran2023self}, video representation learning through V-JEPA~\citep{bardes2023v, assran2025v}, and motion-aware predictive representation learning through MC-JEPA~\citep{bardes2023mc}.
Shown to be promising for representation learning, early world models suffered from collapse~\citep{lecun2022path} when trained without complex training objectives, regularization through exponential moving averages and stop-gradients, the correct hyper-parameters~\citep{sobal2025stress} or pre-trained encoders~\citep{zhou2025dinowm}.
More recently, LeWorldModel~\citep{maes2026leworldmodel} stabilizes JEPA world model training with a simple yet effective regularization of the representation space to an isotropic Gaussian distribution. 

However, a key limitation of existing JEPA-based world models is that their prediction objective is often a deterministic single-target regression in the latent space.
This formulation can be effective when the future state is largely determined by the historical observation context.
The contrary is the case when the conditioned observation history does not contain sufficient information to uniquely determine the future.
For example, consider the scenario where a robot is opening a drawer. From the robot's perspective, any reasonable object could be inside the drawer. Further, the ground-truth future state could easily change over time, if the content of the drawer changed.
A world model predicting only a single future could lead to planning failure, if the predicted content of the drawer does not match the real world.
In such cases, multiple plausible futures may be consistent with the same observation history and the action chunk, but only a subset of them may lead toward the desired goal.
This raises a central question: \emph{
How can we learn a world model to predict useful future representations when the future is inherently ambiguous?
}
More broadly, we investigate latent-space planning that relies on predictive uncertainty to explore multiple plausible futures and select action sequences that effectively reach \emph{partially observed or occluded goals.}

To this end, we propose \algname: Exploration-based Planning with Latent World Model, a framework that trains a stochastic JEPA model to imagine multiple plausible futures in a jointly learned representation space, and plans under uncertainty, balancing goal-reaching and exploration.
At its core, \algname learns to predict future observations through sampling from the conditional distribution of possible futures given a set of input actions.
In practice, we model this stochastic prediction as a conditional flow-matching process~\citep{pooladian2023multisample} by predicting the average velocity~\citep{geng2025mean,geng2025improved} between a random distribution and multiple future representations. 
Past observations and actions serve as additional conditional signals. Encoder and predictor models are jointly learned~\citep{lecun2022path,maes2026leworldmodel} on pairs of past and future frames.

Our key insight is that when the model's uncertainty on the future frames is high, the flow-matching predictor will exhibit high variance in generated future states. Conversely, for future frames that have been observed in recent history, the world model will produce near-deterministic predictions with low variance.
We motivate this intuition through an information-theoretic framework, relating flow predictive variance to an upper bound on predictive entropy and serves as an exploration guidance.
By incorporating this uncertainty signal into Cross-Entropy Method (CEM) planning, our approach balances goal-reaching with exploration of uncertain regions where occluded or unseen future states are most likely located.
We support our hypothesis and approach through both theoretical analysis connecting flow variance to information gain and empirical evaluation on planning tasks.
Experimental results show that \algname achieves the best or on-par performance among all baselines, outperforming existing latent world models by up to $22\%$ across diverse partially observed tasks. Notably, \algname also reaches state-of-the-art performance on visually restricted maze navigation tasks that were previously challenging for latent world models.

\section{Related Works}
\p{World Models}
Recent advances in world modeling has extended state-space dynamics learning to high-dimensional representations such as RGB images~\citep{ha2018world,ha2018recurrent}. Although action-conditioned video generation models have emerged as a popular paradigm for modeling photorealistic evolution of the world~\citep{guo2025ctrl, quevedo2025worldgymworldmodelenvironment, zhu2024irasim, kim2026cosmos}, their pixel reconstruction objective tend to be computationally costly, deeming them impractical for real-time inference and control. In contrast, latent world models aim to learn useful representations of the world as opposed to individual pixels, in particular with joint-embedding predictive architectures (JEPAs)~\citep{lecun2022path}. 
Video JEPA models~\citep{bardes2024revisiting, assran2025v} are trained on internet-scale video data to capture temporal structure for future prediction. D-JEPA~\citep{chen2024djepa} incorporates diffusion and flow matching objectives into JEPA, extending applicability to image, video, and audio generation.
Conditioning on robot actions, models such as DINO-WM~\citep{zhou2025dinowm} and LeWorldModel~\citep{maes2026leworldmodel} leverage learned representations or enforce structure in the latent space, enabling direct application to robot planning and control.

\p{Flow Matching}
Early work on flow matching draws inspiration from Continuous normalizing flows (CNFs)~\citep{chen2018neural} originally introduced as a way to model ordinary differential equations (ODEs) from data pairs using black-box solvers.
CNFs modeled the underlying vector field $v_t$ of the ODE.
As a descendant continuation conditional flow matching (CFM)~\citep{lipman2022flow} methods learned such CNFs for optimal transport between two distributions by only using linear solvers and samples from both distributions
This simplicity enabled brought use across data domains.
The image generation task has been most successfully solved by denoising diffusion probabilistic models~\citep{song2020denoising} that stochastic differential equations (SDEs) before.
Flow matching offered an appealing alternative achieved by reformulating the denoising process as an ODE.
The continuous denoising path of CFMs in contrast to the inherent randomness in the path of SDEs resulted in less denoising steps, later further improved with the rectified flow objective~\citep{liu2022rectflow}.
Not only predominant in scaled image generation~\citep{esser2024scaling} other data domains such as 3D shapes~\citep{xiang2025structured}, molecules~\citep{du2024machine}, or robotic planning~\citep{black2024pi_0} have seen successful implementations of generative models using flow matching.
Most relevant to out work are video generation models~\citep{wan2025wan, hacohen2024ltx} that have been continuously extended towards unbounded  generation horizon~\citep{huang2025self,chen2024diffusion} and action conditioning ~
\citep{guo2025ctrl} rendering them useful for down-stream world modeling~\citep{lingbot-world}.
Unlike video-generation world models that predict pixels, \algname’s stochastic predictor follows the JEPA paradigm by learning an expressive representation space and using a flow ODE to predict future representations.

\p{Uncertainty-Aware Planning} 
%
Early work in uncertainty-aware planning grounded intrinsic motivation in prediction error within learned feature spaces~\citep{pathak2017curiosity} and modeled aleatoric and epistemic uncertainty through probabilistic ensemble dynamics for trajectory-sampled planning~\citep{chua2018pets}.
Plan2Explore~\citep{sekar2020planning} demonstrates that ensemble disagreement in latent dynamics models provides an effective intrinsic signal for directed exploration.
Subsequent work extends this paradigm to structured world models~\citep{sancaktar2022cee}, achieving zero-shot transfer to multi-object manipulation.
More recently, DreamerV3-XP~\citep{bierling2025dreamerxp} augmented the DreamerV3 model~\citep{hafner2023dreamerv3} with ensemble-based reward disagreement as an intrinsic exploration incentive.
\algname draws insight from the exploration literature, and uses a stochastic JEPA world model for both goal-reaching and curiosity signals.
\begin{figure}
    \centering
    \includegraphics[width=1.0\linewidth]{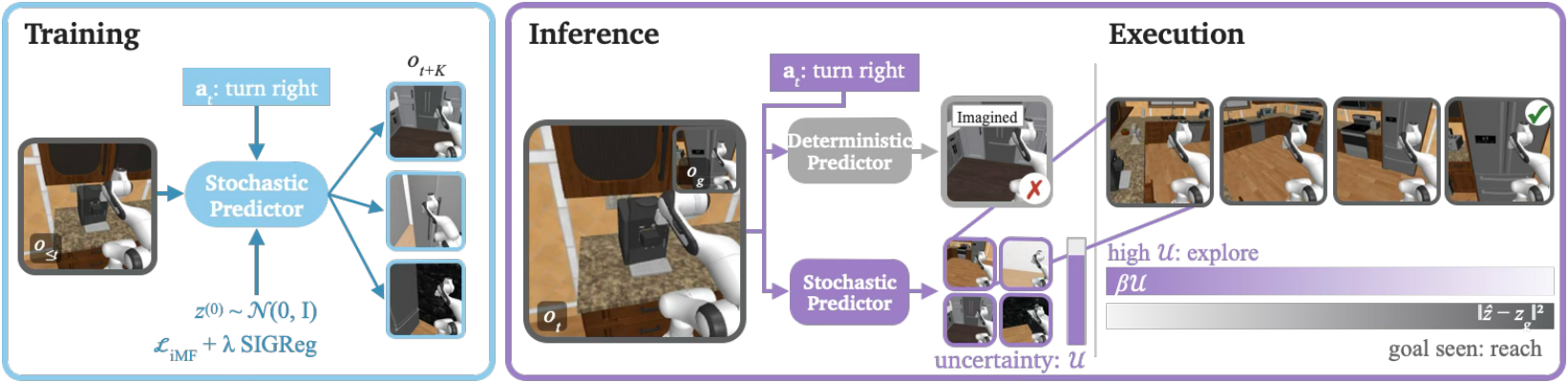}
    \caption{\textbf{A robot is asked to navigate to a fridge initially outside its field of view.} A deterministic world model memorizes the location of the fridge from training data and hallucinates it at inference, even when the true location differs. A stochastic model instead maintains a distribution over plausible futures, enabling uncertainty-aware planning. Dark gray framed images indicate observation inputs; colored frames indicate predicted futures. We only show four possibilities at $t$ and the results of a single choice at $t+1$. In practice observations and live in the jointly learned representation space.}
    \label{fig:robot_fridge}
\end{figure}
\section{\algname}
\label{sec:method}
We propose a stochastic latent world model for goal-conditioned planning under partial observability.
Instead of regressing to a single deterministic future embedding, our model learns a conditional distribution over future latent representations.
This allows planning over multiple plausible latent futures when critical scene is occluded or absent from the current observation history.

\subsection{Problem Formulation}
\label{sec:preliminaries}
We consider goal-conditioned planning in a partially observable Markov decision process (POMDP) $\mathcal{M}=(\mathcal{S},\mathcal{O},\mathcal{A},p,q,r,\gamma)$, where $\mathcal{S}$ is the hidden state space, $\mathcal{O}$ is the observation space, and $\mathcal{A}$ is the action space.
At time $t$, the environment is in an unobserved state $s_t\in\mathcal{S}$ and an observation $o_t\in\mathcal{O}$ is made according to $o_t\sim q(\cdot\mid s_t)$.
After the agent executes an action $a_t\in\mathcal{A}$, the environment evolves according to the transition rule $s_{t+1}\sim p(\cdot\mid s_t,a_t)$.
Since $s_t$ is not directly observed, the agent must plan from a finite observation-action history.

\subsection{Learning the Stochastic Latent Predictive World Model}
\label{sec:model}

We build upon latent world modeling frameworks~\citep{maes2026leworldmodel,sobal2025stress,sobal2022joint,zhou2025dinowm}-- more precisely JEPA-based models ~\cite{maes2026leworldmodel} -- which learn an encoder $\mathcal E_\theta$ that maps observations into a compact latent space and a predictor $\mathcal P_\varphi$ that models action-induced temporal evolution in this latent space.
In its standard form, the predictor is trained to regress a single future embedding, which corresponds to a deterministic approximation of the future latent state.

However, we argue that under partial observability, the future may not be uniquely determined by the current observation history and action, especially when uncertainty is present in the problem. 
As an example, consider a robot tasked to navigate to the fridge in a kitchen that is initially not within its field of view (\Cref{fig:robot_fridge}).
In vanilla JEPA, the world model will learn to ``memorize'' the location of the fridge as they were in the training dataset, e.g., to the robot's right. During inference, given a robot action of turning right, the world model will hallucinate a fridge in the future latent state, even if the fridge is actually to the robot's left~\citep{yin2025womap}.
This is especially challenging when the goal is absent from the conditioning frames. To model such prediction uncertainty, we replace the deterministic predictor with a conditional generative dynamics model.
Given a latent history and a candidate future action chunk,
we learn a latent world model that predicts a distribution $p_\varphi$
over future latent states conditioned on both the latent history and the candidate action chunk as
\begin{align}
    \text{Encoder } & \mathbf z_t = \mathcal{E}_\theta (o_t),\\
    \text{Action Embedder } & \mathbf{c}_{t} =\mathcal C_\psi (\mathbf a_{t:t+K-1})\\
    \text{Predictor } & \mathbf z_{t+1:t+K} \sim \mathcal P_\varphi(\mathbf{z}_{t+1:t+K}\mid \mathbf{z}_{t-H+1:t},\mathbf{c}_{t})
\end{align}
where $H$ is the history length and $K$ is the planning horizon.

\p{Conditional Flow Matching in Gaussian Latent Space}
To learn a $\mathcal P_{\varphi}$ that predicts a distribution over future latent states, we adopt the conditional flow matching (CFM) paradigm~\citep{lipman2022flow}. 
Flow matching learns a time-dependent velocity $v_\varphi$ of a vector field that transports samples from a source distribution, e.g., a Gaussian distribution $\mathcal N(0, I_d)$, to a target distribution.
In our case, the target distribution consists of latent future observations $\mathbf{z}_{t+1:t+K}$ given the latent history $\mathbf{z}_{t-H+1:t}$ and action chunk $\mathbf{c}_t$.

Let $\mathbf{z}^{(1)} = \mathbf{z}_{t+1:t+K}$ be the target future latent block and $\mathbf{z}^{(0)}\sim\mathcal{N}(0,I)$ be a Gaussian source sample of the same shape. 
We can now define a straight-line interpolation~\cite{liu2022rectflow} for flow step $\tau\in[0,1]$, as well as the corresponding conditional velocity target as
\begin{align}\label{eq:velocity_fm}
    \mathbf{z}^{(\tau)} &= (1-\tau)\mathbf{z}^{(0)} + \tau \mathbf{z}^{(1)}, \qquad
    v_{\mathrm{tgt},\tau}(\mathbf{z}^{(\tau)}\mid \mathbf{z}^{(1)}) = \mathbf{z}^{(1)} - \mathbf{z}^{(0)}.
\end{align}
Training $v_\varphi$ to match this per-sample target leads to an unbiased estimator, allowing for scalable training without marginalizing over all source-target pairs~\cite{lipman2022flow}.
This leads to the flow matching objective
\begin{equation}\label{eq:fm_objective}
    \mathcal{L}_{\mathrm{FM}}
    =
    \underset{\tau,\mathbf{z}^{(0)},(\mathbf{x}_{t},\mathbf{c}_{t},\mathbf{z}_{t+1:t+K})}{\mathbb E}
    \left\|
    v_\varphi(\mathbf{z}^{(\tau)},\tau,\mathbf{x}_{t},\mathbf{c}_{t})
    -
    \left(v_{\mathrm{tgt},\tau}(\mathbf{z}^{(\tau)}\mid \mathbf{z}^{(1)})\right)
    \right\|^2_2.
\end{equation}

\p{Average Velocity Prediction for 1-Step Inference}
Flow matching comes with an inherently higher complexity than deterministic prediction at inference time caused by a high number of function evaluations (NFE) in each step of the Euler sampling process $\mathbf{z}^{(\tau+\Delta \tau)} = \mathbf{z}^{(\tau)} + \Delta \tau\cdot
    v_\varphi$.

More recent models~\citep{lu2026one, geng2025improved, geng2025mean} characterize the underlying flow field by its average velocity
\begin{equation}\label{eq:meanflow}
u \left( \mathbf{z}^{\left(\tau\right)} , r, \tau \right) \triangleq \frac{1}{\left(s - r\right) }  \int_r^t v(z^{(\omega)}) d\omega    
\end{equation} between two timesteps $\tau$ and $r$ with $\tau \leq t$ instead of the instantaneous velocity $v ( \cdot, \tau)$ at a single step $s$.
After applying the derivative w.r.t. $s$ to Eq~\eqref{eq:meanflow} and following the rewriting in improved MeanFlow (iMF)~\cite{geng2025improved} we get rid of the integral and derive the instantaneous velocity $v$ in \Cref{eq:velocity_fm} from the average velocity as
\begin{equation}
    v (z^{(\tau)}, \cdot) = u (z^{(\tau)}, \cdot) + (\tau - r) \frac{d}{ds} u (z^{(\tau)}, \cdot).
\end{equation}
This yields the learning objective
\begin{align}
\mathcal{L}_{iMF} &= \mathbb{E}\left\|V_\varphi - v_{\mathrm{tgt}}\right\|_2^2, \\
V_\varphi &=  u_\varphi(\mathbf{z}^{(\tau)}, \mathbf{x}_t,\mathbf{c}_t, r, \tau) + (\tau - r) \cdot  \mathrm{JVP}_{\mathrm{sg}} \left( u_\varphi ; v_\varphi \right), \text{ with } v_\varphi(\mathbf{z}^{(\tau)}, \tau) \equiv u_\varphi(\mathbf{z}^{(\tau)}, \tau, \tau)
\end{align}


and $\mathrm{JVP}_{\mathrm{sg}}$ is the Jacobian-Vector product approximation of $\frac{d}{d\tau} u(\mathbf{z}^{(\tau)}) = \partial_{\mathbf{z}} u(\mathbf{z}^{(\tau)}) v (\mathbf{z}^{(\tau)}) + \partial_\tau u (\mathbf{z}^{(\tau)}) $ with a stop-gradient $\mathrm{sg}$ applied.
While the added complexity during training leads to higher complexity at training time, the inference time can be collapsed to a single function evaluation (1-NFE) by choosing $\tau=0$ and $r=1$.

\p{Isotropic Gaussian Latent Regularization}
We additionally regularize the latent embeddings to follow an isotropic Gaussian distribution using the Sketched Isotropic Gaussian Regularizer (SIGReg)~\citep{balestriero2025lejepa} in line with ~\cite{maes2026leworldmodel}. SIGReg leverages the Cramér-Wold theorem: a multivariate distribution matches a target if all its one-dimensional projections match that target. Concretely, latent embeddings $\mathbf{Z}$ are projected onto $M$ random unit-norm directions $u^{(m)} \in \mathcal{S}^{d-1}$, and the Epps-Pulley normality test statistic $T(\cdot)$ is applied to each projection $h^{(m)} = \mathbf{Z} u^{(m)}$
\begin{equation}
    \mathrm{SIGReg}(\mathbf{Z}) \coloneqq \frac{1}{M} \sum_{m=1}^{M} T(h^{(m)}).
\end{equation}
Because the Epps-Pulley statistic tests against $\mathcal{N}(0,1)$ on each projection, convergence of SIGReg implies $p(z_t) \approx \mathcal{N}(0, I_d)$ at each timestep.
The complete \algname training objective is given by
\begin{equation}\label{eq:sigreg}
    \mathcal{L}_{\text{\algname}} \coloneqq \mathcal{L}_{\mathrm{iMF}} + \lambda\, \mathrm{SIGReg}(\mathbf{Z}),
\end{equation}

jointly supervising stochastic prediction of futures and a well defined latent representation.

\p{Theoretical Properties}
The Gaussian latent geometry induced by SIGReg confers several useful properties on the flow matching objective. We state these as formal propositions and provide proofs in Appendix~\ref{app:proofs}.
 
\begin{restatable}[Source-Target Compatibility]{remark}{rem:compatibility}
\label{rem:compatibility}
SIGReg encourages the latent space marginal distribution $\mathbb P_{\mathbf Z_t}\to\mathcal{N}(0,I_d)$ at each timestep $t$. With flow matching, the source distribution $p_0=\mathcal{N}(0,I_d)$ matches the latent space marginal distribution, leading to integrations always beginning in regions well-supported by the data distribution. This can lead to shorter and better conditioned transport paths in flow matching.
\end{restatable}

In CFM, the same interpolated point $\mathbf z^{(\tau)}$ can be reached from many different $(\mathbf z^{(0)}, \mathbf z^{(1)})$ pairs. Given only $\mathbf z^{(\tau)}$, the network is asked to predict a target $v_{\mathrm{tgt},\tau}$ that is not unique, leading to irreducible gradient variance~\cite{pooladian2023multisample, tong2023improving}. However, since the latent space is encouraged to be $\mathcal N(0,I_d)$ (where $d$ is the dimensionality), we can show that the gradient noise is bounded and data-independent.

\begin{figure}
\vspace{-12pt}
    \centering
    \includegraphics[width=\linewidth]{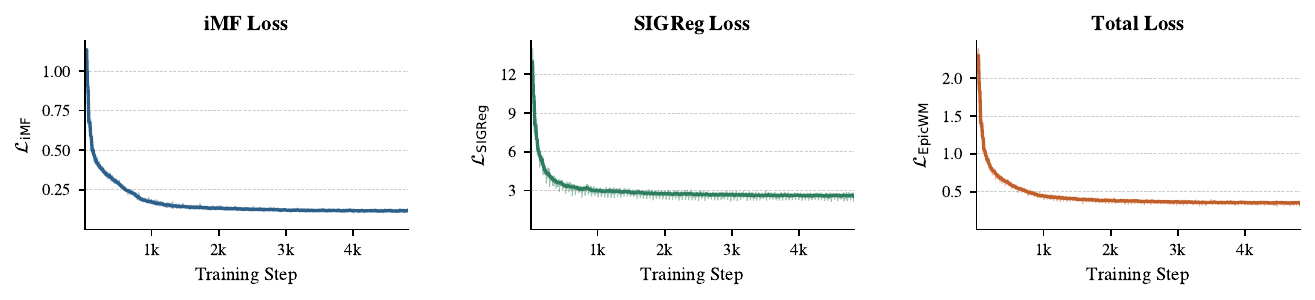}
    \caption{\textbf{Loss curves of \algname on the visual PointMaze environment.} The bold colored lines represents the smoothed loss and the faint colored lines the actual per step loss. Overall we observe minimal variance and smooth convergence of the encoder and stochastic predictor.}
    \label{fig:loss_curves}
\end{figure}
 
\begin{restatable}[Data-Independent Gradient Noise]{proposition}{propboundedgradient}
\label{prop:noise}
Given an isotropic Gaussian latent space with $\mathbb P_{\mathbf Z_t}=\mathcal{N}(0,I_d)$,
for every $\tau\in[0,1]$, the excess variance of the conditional velocity target is given by
\begin{equation}
\mathrm{Var}\left(v_{\mathrm{tgt},\tau}\mid \mathbf{z}^{(\tau)}\right)=\frac{1}{(1-\tau)^2+\tau^2}I_d.
\end{equation}
This quantity depends only on the latent dimensionality $d$ and the flow step $\tau$ (isotropic, $\propto I_d$ ). Therefore, gradient variance is bounded and independent of data, and does not grow as the latent representation becomes more expressive or the conditioning becomes more complex.
\end{restatable}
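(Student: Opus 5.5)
The plan is a direct Gaussian conditioning computation. I would work under the hypothesis of the proposition: the target block $\mathbf z^{(1)}$ is distributed as $\mathbb P_{\mathbf Z_t}=\mathcal N(0,I_d)$, and the source sample $\mathbf z^{(0)}\sim\mathcal N(0,I_d)$ is drawn independently of it, as in the CFM construction of \Cref{eq:velocity_fm}. I would treat the statement at the level of the latent marginal, without the context $(\mathbf x_t,\mathbf c_t)$. This is the main modelling caveat, and I would state it explicitly: if one conditions on the context, the target is no longer standard normal, and the formula would need the conditional covariance in place of $I_d$.

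\textbf{Step 1: joint Gaussianity.} The pair $(\mathbf z^{(\tau)}, v_{\mathrm{tgt},\tau})$ is a linear image of the Gaussian vector $(\mathbf z^{(0)},\mathbf z^{(1)})$, via $\mathbf z^{(\tau)}=(1-\tau)\mathbf z^{(0)}+\tau\mathbf z^{(1)}$ and $v_{\mathrm{tgt},\tau}=\mathbf z^{(1)}-\mathbf z^{(0)}$. It is therefore jointly Gaussian with mean zero. For jointly Gaussian vectors, the conditional covariance is given by the Schur complement
\begin{equation*}
\mathrm{Var}(v\mid \mathbf z^{(\tau)})=\Sigma_{vv}-\Sigma_{vz}\Sigma_{zz}^{-1}\Sigma_{zv}.
\end{equation*}
It does not depend on the observed value of $\mathbf z^{(\tau)}$. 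This is what makes the result ``data-independent''.

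\textbf{Step 2: compute the blocks.} By independence and isotropy, each block is a scalar multiple of $I_d$:
\begin{equation*}
\Sigma_{zz}=\bigl((1-\tau)^2+\tau^2\bigr)I_d,\qquad \Sigma_{vv}=2I_d,\qquad \Sigma_{vz}=\bigl(\tau-(1-\tau)\bigr)I_d=(2\tau-1)I_d.
\end{equation*}
Writing $s_\tau=(1-\tau)^2+\tau^2$, which is at least $1/2$ and so the inverse exists, the Schur complement becomes
\begin{equation*}
\Bigl(2-\frac{(2\tau-1)^2}{s_\tau}\Bigr)I_d=\frac{2s_\tau-(2\tau-1)^2}{s_\tau}\,I_d .
\end{equation*}

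\textbf{Step 3: simplify and conclude.} Expanding the numerator gives $2-4\tau+4\tau^2-(4\tau^2-4\tau+1)=1$, so the conditional covariance equals $\frac{1}{(1-\tau)^2+\tau^2}I_d$, as claimed.

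For the interpretation, note that $s_\tau\in[1/2,1]$ on $[0,1]$. Hence the excess variance is bounded between $I_d$ and $2I_d$, and its trace is at most $2d$. It depends only on $d$ and $\tau$, not on the data or the encoder.

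\textbf{Gradient noise.} The irreducible part of the squared-loss gradient noise is governed by this conditional variance, since the minimiser of \Cref{eq:fm_objective} is $\mathbb E[v\mid\mathbf z^{(\tau)}]$. The proposition's bound on gradient variance therefore follows from this computation.

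There is no real computational obstacle here. The only delicate point is the independence and marginal-normality assumption in Step 1, and I would flag that assumption rather than try to prove it.
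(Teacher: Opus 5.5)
Your proof is correct and is essentially the paper's argument. The paper computes the same covariance blocks and writes $\mathbb E[v_{\mathrm{tgt}}\mid\mathbf z^{(\tau)}]$ as the linear regression on $\mathbf z^{(\tau)}$. It then subtracts that regression's variance from the total variance via the law of total variance, which is exactly your Schur complement, and invokes joint Gaussianity to pass from the expected to the pointwise conditional variance; the only cosmetic difference is that it carries a general latent variance $\sigma^2$, obtaining $\frac{\sigma^2}{(1-\tau)^2+\tau^2\sigma^2}I_d$ before setting $\sigma^2\approx 1$, whereas you fix $\sigma^2=1$ from the start.
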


\Cref{fig:loss_curves} shows the validation loss across individual loss components in~\Cref{eq:meanflow,eq:sigreg} indicating smooth convergence as a result of~\Cref{prop:noise}.

\subsection{Uncertainty-Aware Latent Space Planning}
\label{sec:planning}

Since \algname outputs stochastic future predictions, we can approximate a group of sampled outputs as an ensemble, used to quantify the uncertainty in predicted future states. Specifically, given history $\mathbf x_t$ and action chunk embedding $\mathbf c_t$, we sample $N$ noise vectors ${\mathbf z}{^{(0)}}[i]\sim\mathcal N(0,I_d)$ and run forward pass with the predictor $\mathcal P_\varphi$ for each noise sample to acquire an ensemble of $N$ members, $\{{\hat{\mathbf{z}}}{^{(1)}}[i]\}_{i=1}^N$. Then, we use the disagreement between the ensemble members as a signal of uncertainty of the future state, computed as
\begin{equation}
    \mathcal U=\frac{1}{d}\sum_{j=1}^d\widehat{\text{Var}}_i({\hat{\mathbf z}}{_j^{(1)}}[i]),
\end{equation}
where $j$ is the latent feature dimension. We motivate this choice of this disagreement signal via the following proposition.

\begin{restatable}[Uncertainty as Conditional Entropy Upper Bound]{proposition}
{propuncertainty}
\label{prop:uncertainty}
Let $\Sigma_{\mathbf{c}_t}$ denote the true predictive covariance for an action chunk embedding $\mathbf{c}_t$. For a well-trained $\mathcal P_\varphi$, $\mathcal{U}$ is an unbiased estimator of the mean conditional variance
\begin{equation}
    \mathbb{E}[\mathcal{U}] \;=\; \frac{\mathrm{tr}(\Sigma_{\mathbf{c}_t})}{d}.
\end{equation}
By the maximum entropy theorem, the predictive entropy is upper-bounded by a function of $\mathcal{U}$ as
\begin{equation}\label{eq:bound}
    H(\mathbf{z}^{(1)}\mid\mathbf{x}_t,\mathbf{c}_t)
    \;\leq\;
    \frac{d}{2}\log\!\left(2\pi e\cdot\mathcal{U}\right).
\end{equation}
Furthermore, $\mathcal{U}$ is bounded within $[0,1]$, providing an interpretable absolute scale. $\mathcal{U}\approx 0$ indicates a confident, concentrated prediction, while $\mathcal{U}\approx 1$ indicates predictions as diffuse as the marginal prior.
The bound in Eq.~\ref{eq:bound} is tight only for $\Sigma_{c_t} \propto I_d$ (App. A, Eq. 31); for anisotropic covariances it can be loose, so we use $\mathcal{U}$ as a practical exploration bonus rather than an entropy estimate.
\end{restatable}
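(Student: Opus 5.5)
The plan is to prove the three claims of \Cref{prop:uncertainty} in sequence: unbiasedness, the entropy bound, and the range of $\mathcal U$. Each relies on one standard fact. For the first, I will read ``well-trained'' as meaning that the pushforward of $\mathcal N(0,I_d)$ under the 1-NFE map $\mathbf z^{(0)}\mapsto \hat{\mathbf z}^{(1)}$ equals the true conditional law $p(\mathbf z^{(1)}\mid \mathbf x_t,\mathbf c_t)$. Since the noise vectors $\mathbf z^{(0)}[i]$ are drawn i.i.d., the ensemble $\{\hat{\mathbf z}^{(1)}[i]\}_{i=1}^N$ is then an i.i.d. sample from a law with covariance $\Sigma_{\mathbf c_t}$. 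Taking $\widehat{\mathrm{Var}}_i$ to be the Bessel-corrected sample variance (factor $1/(N-1)$), each coordinate satisfies $\mathbb E[\widehat{\mathrm{Var}}_i(\hat{\mathbf z}^{(1)}_j[i])]=(\Sigma_{\mathbf c_t})_{jj}$. Averaging over $j$ gives $\mathbb E[\mathcal U]=\mathrm{tr}(\Sigma_{\mathbf c_t})/d$.

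For the entropy bound, I will invoke the maximum entropy theorem: among densities on $\mathbb R^d$ with covariance $\Sigma$, the Gaussian maximizes differential entropy. This gives $H(\mathbf z^{(1)}\mid \mathbf x_t,\mathbf c_t)\le \tfrac12\log\big((2\pi e)^d\det\Sigma_{\mathbf c_t}\big)$. AM--GM on the eigenvalues of $\Sigma_{\mathbf c_t}$ then yields $\det\Sigma_{\mathbf c_t}\le (\mathrm{tr}\,\Sigma_{\mathbf c_t}/d)^d$, with equality iff $\Sigma_{\mathbf c_t}\propto I_d$. Combining the two gives $H\le \tfrac d2\log(2\pi e\,\mathrm{tr}(\Sigma_{\mathbf c_t})/d)$, and the equality case of AM--GM is exactly the tightness statement. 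To express the bound in terms of $\mathcal U$, I substitute $\mathrm{tr}(\Sigma_{\mathbf c_t})/d=\mathbb E[\mathcal U]$. This is where I expect the main obstacle. $\mathcal U$ is a random estimator, so the inequality holds exactly for its expectation, or for $\mathcal U$ in the large-$N$ limit where $\mathcal U\to\mathbb E[\mathcal U]$ almost surely by the law of large numbers. For finite $N$, Jensen's inequality applied to the concave $\log$ points the wrong way for a pathwise statement. I will therefore state the bound with $\mathcal U$ interpreted as its expectation, or population value, and remark on this.

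For the range, I will use the law of total variance together with the SIGReg assumption $\mathbb P_{\mathbf Z_t}=\mathcal N(0,I_d)$, as in \Cref{rem:compatibility}. This gives $I_d=\mathrm{Cov}(\mathbf z^{(1)})=\mathbb E[\Sigma_{\mathbf c_t}]+\mathrm{Cov}(\mathbb E[\mathbf z^{(1)}\mid\mathbf x_t,\mathbf c_t])\succeq \mathbb E[\Sigma_{\mathbf c_t}]$. Taking traces and dividing by $d$ shows that $\mathbb E[\mathcal U]\in[0,1]$ on average over contexts. The value $1$ is attained exactly when the conditional mean is constant, that is, when the prediction is as diffuse as the marginal prior, and the value $0$ corresponds to a deterministic prediction. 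Nonnegativity is immediate because $\mathcal U$ is a sum of variances. I will note honestly that the upper bound $1$ is an average statement and not a pointwise one for every context. This caveat is the second delicate point, and it justifies the paper's decision to treat $\mathcal U$ as a practical exploration bonus rather than as a certified entropy estimate.
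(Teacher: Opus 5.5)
Your proposal is correct and follows the paper's proof essentially step for step: Bessel-corrected per-coordinate sample variances for unbiasedness, then the maximum-entropy bound combined with AM--GM on the eigenvalues of $\Sigma_{\mathbf c_t}$ (with equality iff $\Sigma_{\mathbf c_t}\propto I_d$), then the law of total variance under $\mathbb P_{\mathbf Z_t}=\mathcal N(0,I_d)$ for the scale claim. Your two caveats are well placed: in step (ii) the paper's proof silently identifies $\mathrm{tr}(\Sigma_{\mathbf c_t})/d$ with the random estimator $\mathcal U$ itself, and in step (iii) it only establishes $\mathbb E[\mathcal U]\lesssim 1$ on average over contexts, which is exactly the weaker statement you prove.
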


\p{Connection to Expected Information Gain}
Under partial observability, the agent must explore to reduce its uncertainty about unobserved future states. A principled exploration objective is to maximize the Expected Information Gain (EIG)~\citep{houlsby2011bayesian,gal2017deep}, i.e., the amount of information gained from taking an action chunk $\mathbf a_{t:t+K-1}$ and observing the outcome $\mathbf z$. We motivate our uncertainty measure $\mathcal U$ via EIG through the following remark.

\begin{restatable}[Connection to Expected Information Gain]{remark}
{remarkeig}
\label{rem:eig}
In exploration, we seek to gain more information about the future latent state $\mathbf z$. Given the flow matching source distribution, we have a prior over $\mathbf z^{(0)}$ given by $\mathcal N(0, I_d)$. We can interpret sampled pure noises as different ``seeds'' of the world. In the Bayesian active learning literature~\citep{houlsby2011bayesian,gal2017deep}, EIG measures mutual information between the seed and potential future latent states
\begin{equation}
    \mathrm{EIG}(\mathbf{c}_t)=I(\mathbf z^{(0)}; 
  {\mathbf z}^{(1)} | \mathbf x_t, \mathbf c_t) = H[{\mathbf{z}}^{(1)} \mid \mathbf{x}_t, \mathbf{c}_t] -             
  H[{\mathbf{z}}^{(1)} \mid \mathbf{x}_t, \mathbf{c}_t, \mathbf{z}^{(0)}].
\end{equation}
Since the flow ODE is deterministic given $\mathbf{z}^{(0)}$, the per-seed entropy $H[\hat{\mathbf{z}}^{(1)}\mid\mathbf{x}_t,\mathbf{c}_t,\mathbf{z}^{(0)}] =
0$. Therefore, 
\begin{equation}
    \mathrm{EIG}(\mathbf{c}_t)
    \;=\;
    H\!\left[{\mathbf{z}}^{(1)}\mid\mathbf{x}_t,\mathbf{c}_t\right]
    \;\lesssim\; \frac{d}{2}\log(2\pi e\cdot\mathcal{U}),
\end{equation}
where the inequality is from \Cref{prop:uncertainty}. Therefore, $\mathcal{U}$ is a proxy motivated by EIG, and maximizing $\mathcal{U}$ approximately maximizes an upper bound on EIG.
\end{restatable}

\p{Uncertainty-guided Planning Objective}
At inference time, we optimize over action chunk embeddings using Model Predictive Control (MPC) with the Cross-Entropy Method (CEM)
\begin{equation}
    \mathbf{c}^\star_{t}
    \;=\;
    \arg\min_{\mathbf{c}_{t}}
    \;\sum_i^{N}\!\left[\left\|{\hat{\mathbf{z}}}{_{t+K}}[i]
    - \mathbf z_g\right\|^2\right]
    \;-\;\beta\,\mathcal{U}(\mathbf{c}_{t}),
\end{equation}
where $\mathbf z_g=\mathcal{E}_\theta(o_g)$ is the goal embedding and $\beta\geq 0$ balances exploration and exploitation (goal-reaching).

\begin{figure}
\vspace{-12pt}
    \centering
    \includegraphics[width=1.0\linewidth]{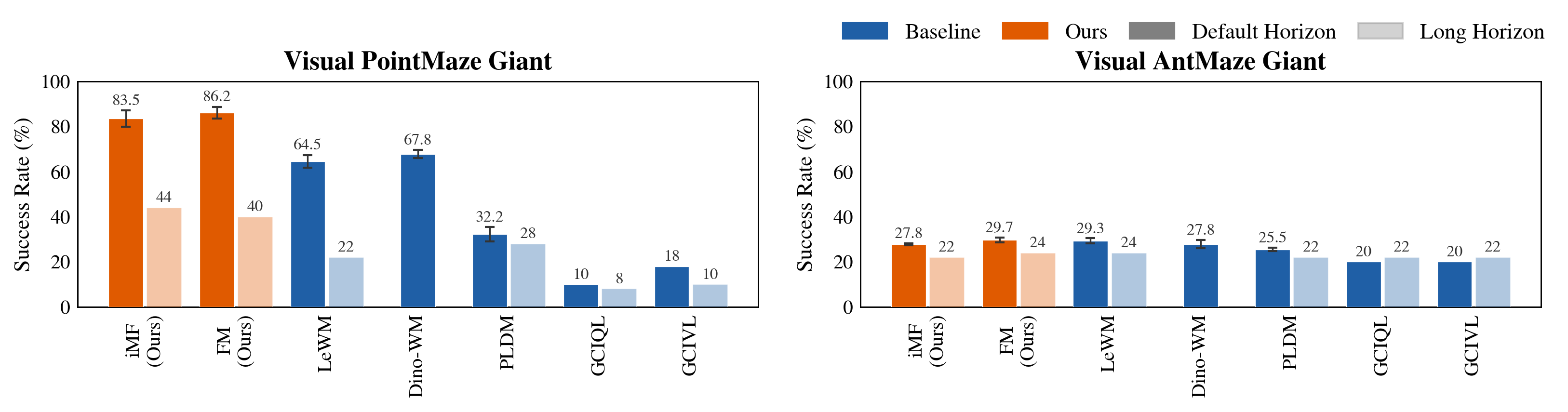}
    \caption{\textbf{Success rate comparison across benchmark environments. }FM and IMF represent \algname with the flow matching objectives introduced in~\Cref{eq:fm_objective} (FM) and~\Cref{eq:meanflow} (iMF). {\algname achieves the best performance in both the default- and long-horizon settings.}}
    \label{fig:success}
\end{figure}
\begin{figure}
\vspace{-8pt}
    \centering
    \includegraphics[width=\linewidth]{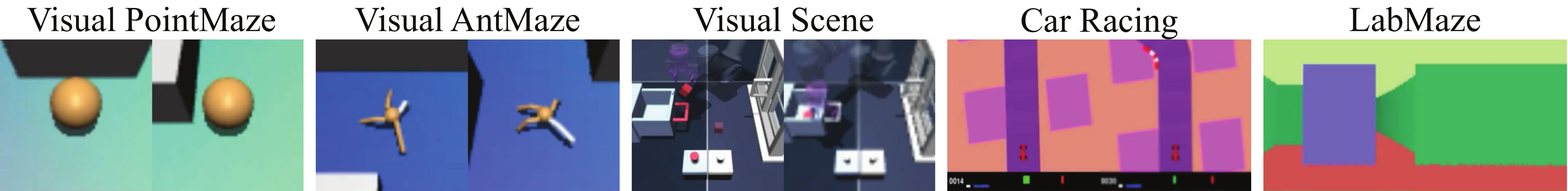}
    \caption{\textbf{Evaluation Tasks.} In each of the five tasks the most left image represents the initial observation $o_0$ and the right a goal observation. All tasks are POMDP evaluations in existing benchmarks. The first three tasks are available in OGbench~\cite{ogbench_park2025}, Car Racing is available in Gymnasium~\cite{towers2024gymnasium}, and LabMaze in DMLab~\cite{beattie2016deepmind}.}
    \label{fig:eval_tasks}
\end{figure}
\section{Experiments}



We evaluate \algname across Visual PointMaze Giant, Visual AntMaze Giant, and Visual Scene, which are representative OGBench tasks with limited fields of view.
We also evaluate on Car Racing and LabMaze, which similarly involve partial observability and restricted visual observations. We visualize the five evaluation environments in~\Cref{fig:eval_tasks}.
We further evaluate \algname in a more complex and complex 3D robotics benchmark by providing experiments on RoboCasa NavigateKitchen tasks.
Across NavigationKirchen we additionally control difficulty and occlusion level to further analyze benefits of \algname's planning capabilities under uncertainty.

\subsection{Baselines}

\p{Goal-Conditioned Reinforcement Learning} We evaluate against standard goal-conditioned RL approaches: GCIVL~\citep{ogbench_park2025} (Goal-Conditioned Implicit Value Learning) and GCIQL~\citep{kostrikov2021offline} (Goal-Conditioned Implicit Q-Learning), which learn value functions for goal-reaching tasks.
\p{World Model based Planning} We compare with recent world model approaches including 
PLDM~\citep{sobal2022joint} (Planning with Latent Dynamics Models), a latent dynamics approach for control; DINO-WM~\cite{zhou2025dinowm}, a deterministic latent world model on frozen DINOv2 features and LeWorldModel \citep{maes2026leworldmodel}, a JEPA-style world model that learns predictive representations for planning in latent space. 

\subsection{Implementation Details}


\p{Token Sequence Construction for Causal Attention} 
The velocity field $v_{\phi}$ is implemented as a transformer operating over a length-$(H+K)$ token sequence constructed by concatenating history and noised future embeddings: $x_{1:(H+K)} = [z_{\text{hist}}^{(1)}, \ldots, z_{\text{hist}}^{(H)}, z_\tau^{(1)}, \ldots, z_\tau^{(K)}]$, augmented with learnable positional embeddings. We employ causal attention masking, ensuring each future token at position $H+i$, where $i \in \{1,2,...,K\}$ can only attend to the $H$ history tokens and earlier future tokens, preventing information leakage from "unproduced" future states.

\begin{figure}
\vspace{-12pt}
    \centering
    \includegraphics[width=\linewidth]{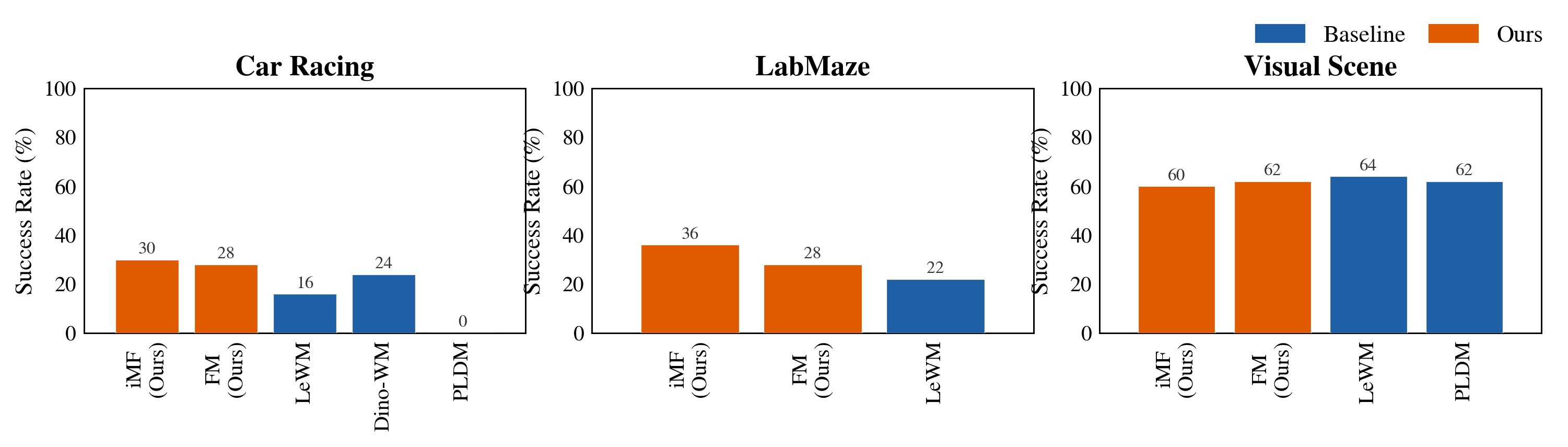}
    \caption{\textbf{Success rate comparison across benchmark environments. } \algname achieves competitive planning success rate across diverse navigation task under partially observable scenes and is on-par for complex manipulation in the Visual Scene.}
    \label{fig:success_2}
\end{figure}

\begin{figure}[t]
\vspace{-8pt}
\centering
\begin{subfigure}{0.32\textwidth}
\begin{tikzpicture}
\begin{axis}[
    width=\textwidth,
    height=5.5cm,
    xlabel={Disagreement Weight ($\beta$)},
    ylabel={Success Rate (\%)},
    xlabel style={font=\small},
    ylabel style={font=\small},
    tick label style={font=\footnotesize},
    legend pos=north east,
    legend style={
        font=\tiny, 
        rounded corners=1pt, 
        fill=white, 
        fill opacity=0.9,
        draw=black!20,
        inner sep=1.5pt,
        outer sep=1.5pt
    },
    axis lines=left,
    axis line style={line width=1pt},
    xmin=0.05, xmax=1.05,
    ymin=50, ymax=88,
    xtick={0.1, 0.4, 1.0},
    xticklabels={0.1, 0.4, 1.0},
    ytick={55, 65, 75, 85},
    mark size=3pt,
    line width=1.5pt,
]

\addplot[
    color=teal, 
    mark=*,
    mark options={fill=white, line width=1pt}
] coordinates {
    (0.1, 54.0) (0.4, 60.0) (1.0, 60.0)
};

\addplot[
    color=orange, 
    mark=square*,
    mark options={fill=white, line width=1pt}
] coordinates {
    (0.1, 72.0) (0.4, 84.0) (1.0, 80.0)
};

\addplot[
    color=purple, 
    mark=diamond*,
    mark options={fill=white, line width=1pt}
] coordinates {
    (0.1, 80.0) (0.4, 84.0) (1.0, 70.0)
};

\legend{NFE=1, NFE=2, NFE=4}
\end{axis}
\end{tikzpicture}
\caption{PointMaze Giant}

\end{subfigure}
\hfill
\begin{subfigure}{0.32\textwidth}
\begin{tikzpicture}
\begin{axis}[
    width=\textwidth,
    height=5.5cm,
    xlabel={Disagreement Weight ($\beta$)},
    ylabel={Success Rate (\%)},
    xlabel style={font=\small},
    ylabel style={font=\small},
    tick label style={font=\footnotesize},
    legend pos=north east,
    legend style={
        font=\tiny, 
        rounded corners=1pt, 
        fill=white, 
        fill opacity=0.9,
        draw=black!20,
        inner sep=1.5pt,
        outer sep=1.5pt
    },
    axis lines=left,
    axis line style={line width=1pt},
    xmin=0.05, xmax=1.05,
    ymin=22, ymax=30,
    xtick={0.1, 0.4, 1.0},
    xticklabels={0.1, 0.4, 1.0},
    ytick={23, 25, 27, 29},
    mark size=3pt,
    line width=1.5pt,
]

\addplot[
    color=teal, 
    mark=*,
    mark options={fill=white, line width=1pt}
] coordinates {
    (0.1, 26.0) (0.4, 24.0) (1.0, 28.0)
};

\addplot[
    color=orange, 
    mark=square*,
    mark options={fill=white, line width=1pt}
] coordinates {
    (0.1, 26.0) (0.4, 28.0) (1.0, 24.0)
};

\addplot[
    color=purple, 
    mark=diamond*,
    mark options={fill=white, line width=1pt}
] coordinates {
    (0.1, 26.0) (0.4, 26.0) (1.0, 26.0)
};

\legend{NFE=1, NFE=2, NFE=4}
\end{axis}
\end{tikzpicture}
\caption{AntMaze Giant}
\end{subfigure}
\hfill
\begin{subfigure}{0.32\textwidth}
\begin{tikzpicture}
\begin{axis}[
    width=\textwidth,
    height=5.5cm,
    xlabel={Disagreement Weight ($\beta$)},
    ylabel={Success Rate (\%)},
    xlabel style={font=\small},
    ylabel style={font=\small},
    tick label style={font=\footnotesize},
    legend pos=north east,
    legend style={
        font=\tiny, 
        rounded corners=1pt, 
        fill=white, 
        fill opacity=0.9,
        draw=black!20,
        inner sep=1.5pt,
        outer sep=1.5pt
    },
    axis lines=left,
    axis line style={line width=1pt},
    xmin=0.05, xmax=1.05,
    ymin=42, ymax=62,
    xtick={0.1, 0.4, 1.0},
    xticklabels={0.1, 0.4, 1.0},
    ytick={45, 50, 55, 60},
    mark size=3pt,
    line width=1.5pt,
]

\addplot[
    color=teal, 
    mark=*,
    mark options={fill=white, line width=1pt}
] coordinates {
    (0.1, 54.0) (0.4, 50.0) (1.0, 60.0)
};

\addplot[
    color=orange, 
    mark=square*,
    mark options={fill=white, line width=1pt}
] coordinates {
    (0.1, 44.0) (0.4, 48.0) (1.0, 50.0)
};

\addplot[
    color=purple, 
    mark=diamond*,
    mark options={fill=white, line width=1pt}
] coordinates {
    (0.1, 48.0) (0.4, 52.0) (1.0, 50.0)
};

\legend{NFE=1, NFE=2, NFE=4}
\end{axis}
\end{tikzpicture}
\caption{Visual Scene}
\end{subfigure}

\caption{\textbf{Ablation study on disagreement weight across environments.} \textbf{(a)} Visual PointMaze Giant shows clear sensitivity to hyperparameters with optimal performance at $\beta=0.4$ and Number Flow Evolutions(NFE)=2. \textbf{(b)} AntMaze Giant demonstrates more stable performance across settings, with modest gains from moderate exploration. \textbf{(c)} Visual Scene shows NFE values (NFE=1) achieve better performance.}
\vspace{-8pt}
\label{fig:ablation_comparison}
\end{figure}

\p{Conditioning Architecture} For history tokens, conditioning combines action embeddings with temporal information: $c_{ctx} = c_{t} + t_{tok}$, where $c_{t}$ is the action embedding defined in the earlier section and $t_{tok}$ is the temporal embedding from TimeMLP($\tau$). Future tokens receive only temporal conditioning: $c_{fut} = t_{tok}$. The full conditioning sequence is constructed as $c_{1:(H+K)} = [c_{\text{ctx}}, c_{\text{fut}}]$ and passed to the transformer alongside the token sequence.


\p{Inference}
For each action chunk, we sample 10 noise samples. For each noise sample, we require 2-NFE in case of average velocity or 10-NFE in case of FM. All models are trained with three seeds (0, 1, 2) and each is evaluated with three evaluation seeds; we report mean and variance. During evaluation we evaluate 50 test scenarios for all methods and seeds in parallel on a single A6000 GPU with 48GB of GPU-RAM showing the efficiency of latent world models. The typical goal offset distance from the start planning place is $25$ steps.

$\beta$ and NFE are selected by grid search on a held-out set of scenes disjoint from training and test scenes. N, K and NFE are fixed across all main experiments.

\p{Training Details} All models were trained on A6000 GPUs. Especially with the iFM objective achieved early convergence after less than 5 GPU hours for Visual PointMaze data with simple action spaces and up to 20 GPU hours for Visual AntMaze with higher dimensional actions. 
All other environments land in-between. 
Training data across all OGBench~\cite{ogbench_park2025} tasks and LabMaze~\cite{beattie2016deepmind} tasks was produced from heuristic experts trajectories. 
RoboCasa experiments were trained on 500 human demonstrations of NavigationKitchen.

\subsection{Planning Performance of \algname}
In~\Cref{fig:success}, we compare our \algname with prior goal-conditioned and world-model baselines on two large-scale partially observable navigation tasks. On Visual PointMaze Giant, \algname achieves the best  performance across all methods, especially we have $\approx 22\%$ more success rate than LeWorldModel. IMF reaches a success rate of $84\% \pm 3.6$, while FM further improves to $86\% \pm 2.6$ vs. LeWM $64.5 \pm 2.9$ and DINO-WM $67.8 \pm 1.8$. These results substantially outperform prior model-based baselines such as LeWM, Dino-WM and PLDM, as well as goal-conditioned policy baselines such as GCIQL and GCIVL. This result suggests that explore-driven latent planing is more effective in visually constrained maze environments.

On Visual AntMaze Giant, the task becomes more challenging due to higher-dimensional action spaces. In this setting, FM achieves 30\% success and iMF achieves 28\%, achieving on-par performance with LeWM, DINO-WM and PLDM. 
Across both environments, the long-horizon setting, which doubles the target planning horizon, consistently reduces performance.

We further evaluate \algname on Car Racing, LabMaze, and Visual Scene, which also involve partial observability and limited visual context. Planning success rate results are summarized in~\Cref{fig:success_2}. Consistent with the navigation results, \algname performs strongly on Car Racing, where iMF and FM achieve 30\% and 28\% success, respectively, almost doubled the success rate of LeWM (16.0\%) and above DINO-WM (24.0\%). PLDM fails out-of-regime. On LabMaze, iMF again achieves the best performance. On Visual Scene, our methods remain competitive, though the smaller gap suggests that flow-based latent prediction is most effective in partially observable tasks with relatively low-dimensional action spaces.

The gain is not explained by inference compute: giving LeWM 6000 CEM candidates (20× default) raises PointMaze success only to 74.0\% (App. C.2), still $\approx10$ points below iMF; on AntMaze extra samples do not help (24.0–26.5\%).
\begin{figure}
    \centering
    \includegraphics[width=\linewidth]{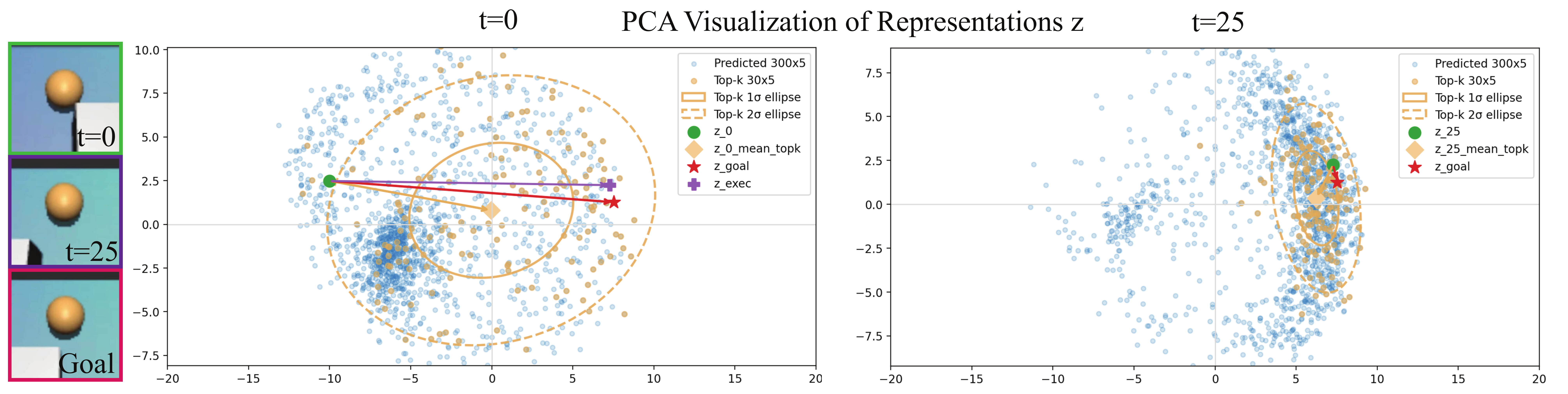}
    \caption{\textbf{Latent uncertainty contraction during planning on a single rollout.} Each panel shows the top-$k$ latent predictions in a shared PCA space, together with the current latent ($\mathbf z_{0}$ or $\mathbf z_{25}$), the fixed goal latent ($\mathbf z_\text{goal}$), and the top-k covariance ellipse. The red arrow measures $\|z_t - z_{goal} \|$}
    \label{fig:latent_analysis}
\end{figure}


\subsection{Planning Performance Under Controlled Uncertainty Levels in RoboCasa}

We further evaluate \algname on the RoboCasa NavigateKitchen task under controlled levels of uncertainty across general task difficulty and goal observability. We vary the robot's initial distance to the goal, which is moderately correlated with initial goal visibility ($\rho=-0.53$). Across this sweep, the goal is often partially visible or occluded at the start. Figure~\ref{fig:robocasa_rollouts} shows representative EpicWorldModel rollouts
in which the fridge is initially outside the robot's field of view or occluded
by kitchen fixtures, and the robot must explore before the goal becomes visible. We compare the deterministic LeWM baseline against our stochastic predictor with exploration disabled ($\beta=0$) and enabled ($\beta=0.1$, the default weight, not tuned for this task) and LeWM with 3× more CEM candidates (LeWM 3×) to control for sampling budget. We evaluate 15 episodes at each distance, for 60 episodes per method. For RoboCasa all models (LeWM and \algname variants) share a frozen DINOv3-ViT-S/16 encoder (App. B)

\p{Controlled Difficulty} Three trends emerge from experiments documented in~\Cref{tab:robocasa_controlled_difficulty}. First, under matched planning compute (the same CEM search budget), the stochastic predictor without exploration already substantially outperforms deterministic LeWM, achieving $71.7\%$ versus $45.0\%$ success rate when pooled across distances. Second, this gain cannot be explained solely by a larger sampling budget: increasing LeWM to $3\times$ more candidates improves short-range performance but provides little benefit at larger distances. Third, the exploration bonus becomes particularly useful when the goal cost is less informative. At 1.26 m, enabling exploration increases iMF success from $46.7\%$ to $80.0\%$, while leaving easier short-range settings unchanged. The FM variant performs best at the largest distance, reaching $93.3\%$ success compared with $20.0\%$ for LeWM.

\p{Controlled Occlusions/Visibility} Distance, however, is only an indirect proxy for partial observability. We therefore additionally measure the fraction of the goal visible from the robot's initial observation. Specifically, we sample approximately 200 points on the goal fixture geometry and compute the fraction that lies within the camera frustum and is not occluded by scene geometry. The resulting score ranges from 0 (fully hidden or outside the field of view) to 1 (fully visible and unobstructed). This geometric measure requires no learned model or tuned threshold and is additionally verified using segmentation masks and manual inspection.

The occlusion-controlled RoboCasa experiments in~\Cref{tab:robocasa_occlusion_visibility} provide preliminary evidence that uncertainty-driven exploration yields information about the hidden goal location. When the goal is initially occluded ($<0.5$), exploration substantially improves success ($1/6 \rightarrow 5/6$). This benefit decreases as visibility increases and can even reduce goal-reaching performance when the goal is fully observable ($\geq 0.95$; $21/22 \rightarrow 19/22$). These results suggest that the variance bonus is useful for goal discovery under partial observability but unnecessary once the goal is visible.

\begin{table}[tbp]
\caption{\textbf{Controlled difficulty analysis on RoboCasa NavigateKitchen.}
Success rate (\%) across increasing start distances. Best results are shown in \textbf{bold}, and the second-best average is \underline{underlined}. (iMF NFE=2; FM NFE=5; LeWM 3× n = 8–12)}
\vspace{6pt}
\centering
\small
\setlength{\tabcolsep}{10pt}
\renewcommand{\arraystretch}{1.08}
\begin{tabular}{lccccc}
\toprule
\textbf{Start Dist.} &
\textbf{LeWM} &
\textbf{LeWM $3\times$} &
\textbf{iMF ($\beta=0$)} &
\textbf{iMF ($\beta=.1$)} &
\textbf{FM ($\beta=.1$)} \\
\midrule
0.50 m & 60.0 & 93.3 & \textbf{100.0} & \textbf{100.0} & \textbf{100.0} \\
0.92 m & 46.7 & 75.0 & 80.0 & 80.0 & \textbf{86.7} \\
1.26 m & 53.3 & 40.0 & 46.7 & \textbf{80.0} & \textbf{80.0} \\
1.46 m & 20.0 & 25.0 & 60.0 & 60.0 & \textbf{93.3} \\
\midrule
\textbf{Average} &
45.0 &
58.3 &
71.7 &
\underline{80.0} &
\textbf{90.0} \\
\bottomrule
\end{tabular}
\label{tab:robocasa_controlled_difficulty}
\end{table}
\begin{figure}[tbp]
  \centering
  \includegraphics[width=\textwidth]{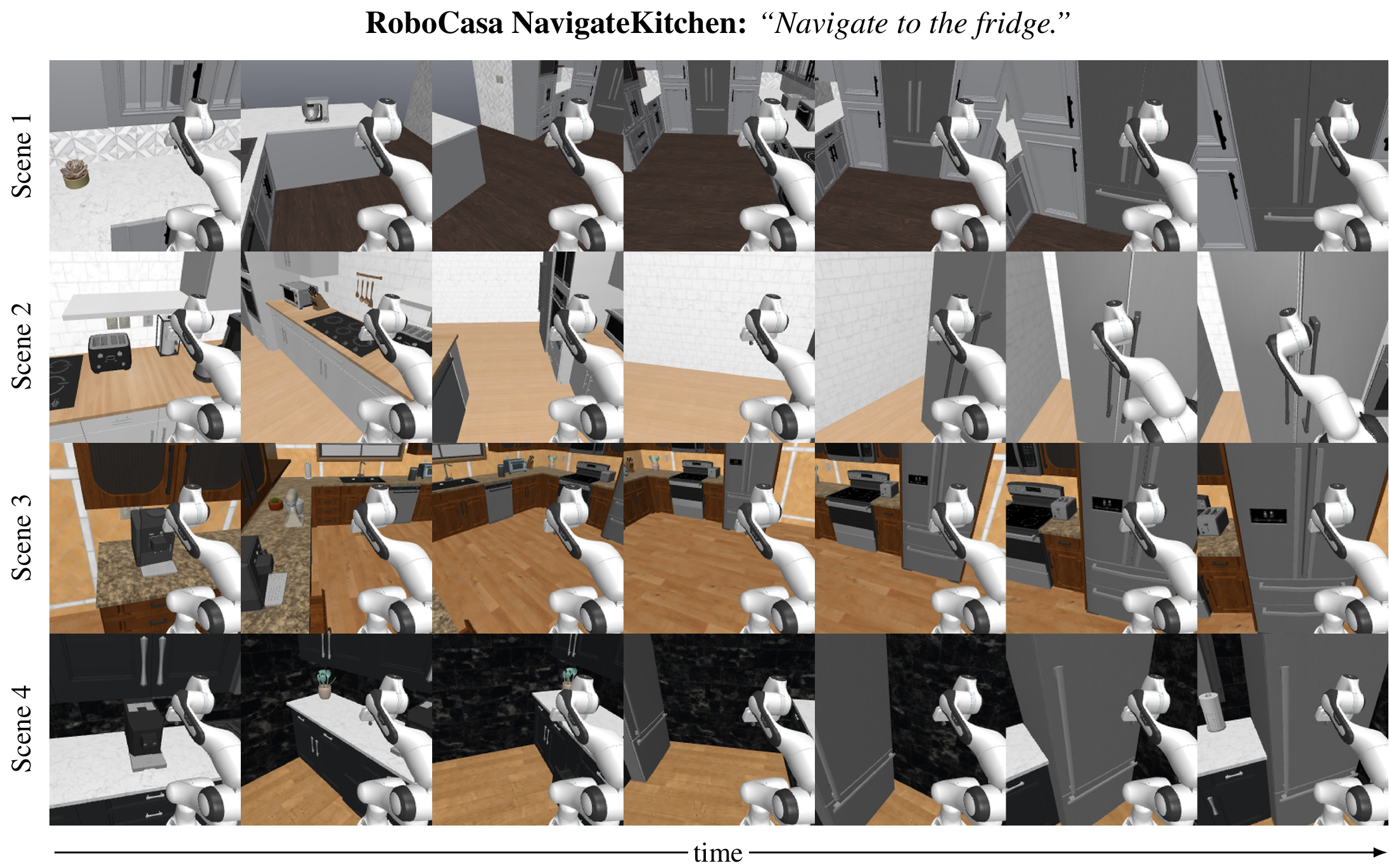}
  \caption{\textbf{Qualitative rollouts of EpicWorldModel on RoboCasa NavigateKitchen.}
  Each row shows an episode in a different kitchen layout from the validation scene split.
  The initial visibility is < 0.5 for all episodes, only displaying the hardest tasks.
  All episodes are conditioned on the same instruction, and frames are ordered left to right in time.
  At the start of each episode the fridge is outside the field of view or occluded by cabinets and counters. 
  Guided by the uncertainty cost $\mathcal{U}$, the robot first turns and moves toward unobserved regions of the kitchen, then commits to the goal once the
  fridge becomes visible.
  All shown results are from an iMF model ($\beta = 0.1$, NFE $= 2$).}
  \label{fig:robocasa_rollouts}
\end{figure}
\begin{table}[t]
\caption{\textbf{Occlusion/visibility analysis on RoboCasa NavigateKitchen.}
Successes / episodes (pooled \%) stratified by initial goal visibility. Lower visibility indicates stronger partial observability. Best results are shown in \textbf{bold}.  (iMF NFE=2; FM NFE=5; LeWM 3× n = 8–12)}
\vspace{6pt}
\centering
\small
\setlength{\tabcolsep}{10pt}
\renewcommand{\arraystretch}{1.08}
\begin{tabular}{lcccc}
\toprule
\textbf{Initial Visibility} &
\textbf{LeWM} &
\textbf{iMF ($\beta=0$)} &
\textbf{iMF ($\beta=.1$)} &
\textbf{$n$} \\
\midrule
$<0.5$ &
0/6 &
1/6 &
\textbf{5/6} &
6 \\

$0.5$--$0.8$ &
2/7 &
5/7 &
\textbf{6/7} &
7 \\

$0.8$--$0.95$ &
1/4 &
\textbf{2/4} &
1/4 &
4 \\

$\geq0.95$ &
10/22 &
\textbf{21/22} &
19/22 &
22 \\
\midrule
\textbf{Pooled} &
13/39 (33.3) &
29/39 (74.4) &
\textbf{31/39 (79.5)} &
39 \\
\bottomrule
\end{tabular}
\label{tab:robocasa_occlusion_visibility}
\end{table}

\subsection{Ablation Study on the choice of \texorpdfstring{$\beta$}{beta} and the Number of Flow Evaluations (NFE)}
We conduct an ablation study to understand the sensitivity of our approach to key hyperparameters: the disagreement weight $\beta$, which controls the exploration-exploitation trade-off, and the Number of Flow Evaluations (NFE) during planning, which determines the computational budget for inference.

We evaluate three representative environments from our benchmark suite: Visual PointMaze Giant, Visual AntMaze Giant, and Visual Scene. For each environment, we conduct an ablation over the disagreement weight $\beta \in \{0.1, 0.4, 1.0\}$ and the number of flow evaluations $\mathrm{NFE} \in \{1,2,4\}$.~\Cref{fig:ablation_comparison}~a--c summarize the sensitivity of performance to these two hyperparameters across environments. The results indicate that hyperparameter selection should be adapted to the structure and dimensionality of each task. Across most environments, $\beta=0.4$ and $\mathrm{NFE}=2$ achieve a favorable trade-off between uncertainty-driven exploration and computational cost. In contrast, the Visual Scene results suggest that, in high-dimensional action spaces, single-step prediction combined with a larger exploration weight may provide a more effective planning strategy. Automatically adapting $\beta$ is left to future work.

\subsection{Analysis on Exploration Behavior in Latent Space Planning}\label{sec:exp_behavior}
\Cref{fig:latent_analysis} provides a qualitative view of the top-$k$ and the full cross entropy samples' latent predictions for a single test case. The full cross-entropy sample predictions are shown in blue and the top-k selections in yellow. We project the $D = 192$ latent embeddings $\mathbf z_{t}$ to the PCA 2-Dimensional space for analysis and visualization. At the first planning point ($t = 0$), the elite latent cloud is broad, reflecting substantial uncertainty over future outcomes when the agent is still far from the goal. After executing the first plan, the executed rollout reaches a latent state close to the goal within the first 25-step plan. By the second planning point ($t = 25$), the elite cloud has contracted markedly around the current and goal-adjacent region, with the top-k ellipse area shrinking from 60.99 to 11.86. This example is consistent with the hypothesis that predictive uncertainty is larger early in planning and decreases once the agent approaches the goal. Furthermore, the observed mean of the top-k predictions points in the direction toward the goal, suggesting that the predictive variance of the stochastic predictor is consistent with goal-directed progress rather than random noise.



\section{Conclusion}
We present \algname, an uncertainty-aware planning framework leveraging stochastic JEPAs based on flow matching. Through theoretical motivation and experimental validation, we show that \algname is a stable JEPA capable of achieving high planning success rates, especially in environments under partial observability. Although our results demonstrate the promise of exploration-driven planning with latent world models, several limitations remain. In particular, goal-image conditioning can be restrictive, as many real-world tasks are more naturally specified through language and goal images may provide privileged information unavailable at deployment. Extending the framework to align visual observations with language-based or abstract goal specifications is therefore an important direction for future work.



\clearpage

\bibliographystyle{unsrtnat}
\bibliography{references.bib}

\clearpage

\appendix
\section{Proofs}
\label{app:proofs}

\propboundedgradient*
\begin{proof}
By the law of total variance applied to the conditional velocity target $v_\text{tgt} = \mathbf{z}^{(1)} - \mathbf{z}^{(0)}$:
\begin{equation}
    \mathrm{Var}(v_\text{tgt}) = \mathbb{E}_{\mathbf{z}^{(\tau)}}[\mathrm{Var}(v_\text{tgt} \mid \mathbf{z}^{(\tau)})]
    + \mathrm{Var}(\mathbb{E}[v_\text{tgt} \mid \mathbf{z}^{(\tau)}]).
\end{equation}
On the left hand side, by independence of $\mathbf{z}^{(0)}$ and $\mathbf{z}^{(1)}$, we have:
\begin{equation}
    \mathrm{Var}(v_\text{tgt}) = \mathrm{Var}(\mathbf{z}^{(1)} - \mathbf{z}^{(0)}) = \sigma^2 I_d + I_d = (\sigma^2 + 1)I_d,
\end{equation}
where we denote the variance of the SIGReg-induced latent space by $\sigma^2$. On the right hand side, $\mathbb{E}[v_\text{tgt} \mid \mathbf{z}^{(\tau)}] = v_\text{tgt}(\mathbf{z}^{(\tau)})$ is the marginal vector field~\cite{lipman2022flow}. Since $\mathbf{z}^{(\tau)} = (1-\tau)\mathbf{z}^{(0)} + s\mathbf{z}^{(1)}$, the joint covariance of $(\mathbf{z}^{(1)} - \mathbf{z}^{(0)},\, \mathbf{z}^{(\tau)})$ is given by:
\begin{equation}
    \mathrm{Cov}(\mathbf{z}^{(1)} - \mathbf{z}^{(0)},\, \mathbf{z}^{(\tau)})
    = \tau \,\mathrm{Var}(\mathbf{z}^{(1)}) - (1-\tau)\,\mathrm{Var}(\mathbf{z}^{(0)})
    = (\tau\sigma^2 - (1-\tau))I_d,
\end{equation}
\begin{equation}
    \mathrm{Var}(\mathbf{z}^{(\tau)}) = (1-\tau)^2 I_d + \tau^2\sigma^2 I_d
    = [(1-\tau)^2 + \tau^2\sigma^2]I_d.
\end{equation}
The marginal vector field is the linear conditional expectation:
\begin{equation}
    v_\text{tgt}(\mathbf{z}^{(\tau)}) = \frac{\tau\sigma^2 - (1-\tau)}{(1-\tau)^2 + \tau^2\sigma^2}\, \mathbf{z}^{(\tau)},
\end{equation}
with variance:
\begin{equation}
    \mathrm{Var}(v_\text{tgt}(\mathbf{z}^{(\tau)}))
    = \frac{(\tau\sigma^2 - (1-\tau))^2}{(1-\tau)^2 + \tau^2\sigma^2}I_d.
\end{equation}
Subtracting:
\begin{align}
    \mathbb{E}_{\mathbf{z}^{(\tau)}}[\mathrm{Var}(v_\text{tgt} \mid \mathbf{z}^{(\tau)})] &= 
    \mathrm{Var}(v_\text{tgt}) - \mathrm{Var}(\mathbb{E}[v_\text{tgt} \mid \mathbf{z}^{(\tau)}])\\
    &= (\sigma^2 + 1)I_d -  \frac{(\tau\sigma^2 - (1-\tau))^2}{(1-\tau)^2 + \tau^2\sigma^2}I_d \\
    &= \frac{\sigma^2}{(1-\tau)^2 + \tau^2\sigma^2}I_d.
\end{align}

Since $\mathbf{z}^{(0)}\sim\mathcal{N}(0,I_d)$, $\mathbf{z}^{(1)}\sim\mathcal{N}(0,\sigma^2
I_d)$, and $\mathbf{z}^{(\tau)}=(1-\tau)\mathbf{z}^{(0)}+s\mathbf{z}^{(1)}$, the triple
$(\mathbf{z}^{(0)},\mathbf{z}^{(1)},\mathbf{z}^{(\tau)})$ is jointly Gaussian. As a result, conditional variances are homoscedastic.
Therefore, for each flow time $s$:
\begin{equation}
    \mathrm{Var}(v_\text{tgt} \mid \mathbf{z}^{(\tau)}) = \frac{\sigma^2}{(1-\tau)^2 + \tau^2\sigma^2}I_d.
\end{equation}
Under SIGReg, $\sigma^2 \approx 1$, yielding the desired result.
\end{proof}

\propuncertainty*

\begin{proof}
\p{(i) Unbiasedness}
For a well-trained $\mathcal{P}_\varphi$, the ensemble $\{{\hat{\mathbf{z}}}{^{(1)}}[i]\}_{i=1}^N$ are i.i.d.\ draws from $p(\mathbf{z}^{(1)}\mid\mathbf{x}_t,\mathbf {c}_t)$. The sample variance $\widehat{\mathrm{Var}}_i([{\hat{\mathbf{z}}}{^{(1)}}[i]]_j)$ is an unbiased estimator of the $j$-th marginal variance $[\Sigma_{\mathbf {c}_t}]_{jj}$ for each $j$. Averaging over dimensions:
\begin{equation}
    \mathbb{E}[\mathcal{U}]
    = \frac{1}{d}\sum_{j=1}^d [\Sigma_{\mathbf {c}_t}]_{jj}
    = \frac{\mathrm{tr}(\Sigma_{\mathbf {c}_t})}{d}.
\end{equation}

\p{(ii) Entropy upper bound}
By the maximum entropy theorem, given the same covariance matrix, the Gaussian distribution maximizes differential entropy:
\begin{equation}
    H(\mathbf{z}^{(1)}\mid\mathbf{x}_t,\mathbf {c}_t)
    \;\leq\; \tfrac{1}{2}\log\bigl((2\pi e)^d\,|\Sigma_{\mathbf {c}_t}|\bigr).
\end{equation}
This bound involves the determinant $|\Sigma_{\mathbf {c}_t}|$, which is not directly observable from the ensemble. We relate it to $\mathcal{U}$ via AM-GM applied to the eigenvalues $\{\lambda_i\}_{i=1}^d$ of $\Sigma_{\mathbf {c}_t}$:
\begin{equation}
    \frac{1}{d}\log|\Sigma_{\mathbf {c}_t}|
    = \frac{1}{d}\sum_{j=1}^d \log\lambda_j
    \;\leq\; \log\!\left(\frac{1}{d}\sum_{i=1}^d \lambda_i\right)
    = \log\frac{\mathrm{tr}(\Sigma_{\mathbf {c}_t})}{d}
    = \log\mathcal{U},
\end{equation}
where we applied Jensen's inequality to the concave $\log$, with equality if and only if $\Sigma_{\mathbf {c}_t}\propto I_d$. Therefore, $|\Sigma_{\mathbf {c}_t}|\leq\mathcal{U}^d$, and we obtain:
\begin{equation}
    H(\mathbf{z}^{(1)}\mid\mathbf{x}_t,{\mathbf {c}_t})
    \;\leq\; \tfrac{1}{2}\log\bigl((2\pi e\,\mathcal{U})^d\bigr)
    = \tfrac{d}{2}\log(2\pi e\cdot\mathcal{U}).
\end{equation}

\p{(iii) Bounded scale}
By the law of total variance:
\begin{equation}
    \mathrm{Cov}(\mathbf{z}^{(1)})
    = \mathbb{E}_{\mathbf{x}_t,{\mathbf {c}_t}}[\Sigma_{\mathbf {c}_t}]
    + \mathrm{Cov}(\mathbb{E}[\mathbf{z}^{(1)}\mid\mathbf{x}_t,{\mathbf {c}_t}]).
\end{equation}
Given the isotropic Gaussian latent space, $\mathbb{E}[\mathrm{tr}(\Sigma_{\mathbf {c}_t})] \leq
\mathrm{tr}(\mathrm{Cov}(\mathbf{z}^{(1)})) \approx d$. Therefore, $\mathbb{E}[\mathcal{U}] \lesssim 1$, with equality when $\mathrm{Cov}(\mathbb{E}[\mathbf{z}^{(1)}\mid\mathbf{x}_t,{\mathbf {c}_t}])
\approx 0$, i.e.\ when conditioning is uninformative.
\end{proof}

\section{Additional Implementation Details}

\p{Latent Space Architecture} \algname operates in a learned latent space rather than directly on high-dimensional observations. Input observations $o_{t} \in \mathbb{R}^{3\times224\times224}$ are processed by $\mathcal{E}_\theta$, which has a Vision Transformer (ViT-Tiny) backbone with patch size 14 and embedding dimension $D_{e} = 192$, and trained from scratch without pretraining. The CLS token is projected through a two-layer MLP with BatchNorm to produce latent embeddings $z \in \mathbb{R}^{D}$, where $D = 192$. Goal states $z_{goal}$ are encoded using the same encoder $\mathcal{E}_\theta$ to ensure goal distance computes within the shared latent space.

\p{Action Encoding} Actions are embedded via a Conv1d-MLP architecture operating on frame-skipped action vectors $a \in \mathbb{R}^{F \cdot d_{a}} $ with frameskip $F = 5$. For example, PointMaze environments with $d_{a} = 2$ produce $10$-dimensional action inputs, which are embedded to generate one $D$-dimensional action token per predicted future step.

\p{Output Slicing} The transformer processes both history and future tokens but only returns predictions for the future portion: the output is sliced to out$[:,K :]$, corresponding to the predicted velocity field for $K$ future time steps.

\p{Planner} LeWM~\cite{maes2026leworldmodel}, iMF and FM share a CEM planner over 25-step action sequences (5 chunks $\times$ 5 actions). 
Each planning call runs 30 iterations with 300 candidates, and a selection of top-30 sequences, initialised with std equal to the action scale. The final best mean is executed open-loop before replanning. 
We do 2 planning calls per 50-step episode and 3 planning calls for the long-horizon experiments. LeWM scores each candidate with one deterministic rollout using the final-step latent MSE to the latent goal $z_g$. 
\algname draws N rollouts (FM: 10 NFE; iMF: 2 NFE) and scores mean goal $MSE - \beta U$, both z-score normalised across candidates.

\p{Implementation and training details for RoboCasa task} All models (LeWM and \algname) are trained on 500 human demonstrations of NavigateKitchen (72,786 frames; held-out ``target'' scene split), with a frozen DINOv3-ViT-S/16 encoder (mean-pooled patch tokens, 384-d; zero encoder
updates), followed by a trained projector and latent-dynamics predictor,using the JEPA-style latent-prediction recipe (predicting future projected latents conditioned on actions; SIGReg regularization; no pixel reconstruction). Inputs consist of a 3-frame history with a frameskip of 5 (single egocentric camera, $224 \times 224$); actions are 3-DoF base velocity commands (the arm is masked for this navigation task). Optimization uses AdamW ($\mathrm{lr}=5 \times 10^{-5}$, $\mathrm{wd}=10^{-3}$), with a batch size of 448, bf16 precision, and 100 epochs on a single GPU ($\sim$2.5--4 h per model). The three variants---deterministic AR, one-step iMeanFlow (iMF), and 10-step flow matching (FM)---differ only in the predictor family; the encoder, projector architecture, data, and all hyperparameters are identical. All evaluations use each model's final (epoch-100) checkpoint; training losses converged well before epoch 100.

\section{Additional Results}
\label{app:additional_results}

We also report additional experiments that complement the main results including:

\begin{itemize}
    \item An ablation disabling the exploration term $\mathcal{U}$
    \item a compute-matched comparison (in contrast to a step-matched) against the deterministic baseline
    \item a sweep over sampled futures
    \item and a population-level check of the latent-space analysis in Sec.~\ref{sec:exp_behavior}.
\end{itemize}

Unless noted otherwise, all experiments follow the CEM/MPC planning protocol of App.~B as used in the main paper. 

\subsection{Effect of the Exploration Term (\texorpdfstring{$\beta = 0$}{beta = 0})}
\label{app:c1_beta_zero}

To isolate the contribution of the exploration term from the contribution of the predictor itself, we extend $\beta$/NFE sweep of Fig.~6(a) with a $\beta = 0$ condition on Visual PointMaze Giant.
Disabling the exploration term lowers success by 7 points for iMF (NFE~=~4, $\beta = 0.4$) and by 6 points for FM at NFE~=~10 ($\beta = 0.4 $).
The stochastic predictor alone therefore explains part, but not all of the gain over the deterministic LeWM~\cite{maes2026leworldmodel}.
This can only be attributed to the exploration term in the planning cost.

\begin{table}[ht]
\centering
\small
\caption{Effect of disabling the exploration term ($\beta=0$) on Visual
PointMaze Giant, as a function of the number of function evaluations (NFE)
at planning time. Left: iMeanFlow (iMF); right: flow matching (FM). Success rate (\%), single training/evaluation run per cell. Best value per column in \textbf{bold}.}
\vspace{4pt}
\label{tab:beta_zero}
\centering
\begin{tabular}{c | ccccc | cc}
\hline
 & \multicolumn{5}{c|}{\textbf{iMF}} & \multicolumn{2}{c}{\textbf{FM}} \\
\cline{2-8}
$\beta \backslash$ NFE & 1 & 2 & 4 & 8 & 10 & 4 & 10 \\
\hline
0.0 & 51.0 & 69.5 & 76.5 & 81.5 & 81.5 & 78.0 & 79.5 \\
0.4 & \textbf{51.5} & \textbf{74.5} & \textbf{83.5} & \textbf{86.0} & \textbf{86.0} & \textbf{81.0} & \textbf{85.5} \\
\hline
\end{tabular}
\end{table}

\subsection{Compute-Matched LeWorldModel}
\label{app:c2_compute_matched}

Our method evaluates $N$ stochastic rollouts per candidate at planning time, while LeWM, as a deterministic model, does only compute a single rollout per action candidate.
Therefore a fraction of \algname could be in principle come from the additional inference compute rather than the exploration cost term and stochastic predictor.
We test this by giving the deterministic LeWM baseline additional CEM action candidates to approximate the per-candidate cost of our $N \times \text{NFE} = 20$ (NFE=4, N=5)  evaluations up to $20 \times 300 = 6000$ candidates.

\Cref{tab:compute_matched} shows that scaling LeWM's action candidate budget alone recovers only part of the gap. On PointMaze the success rises from 54.0\% to 74.0\% but plateaus with any additional compute.
The remaining gap to \algname iMF with $\approx$10 points remains at lower and matched compute.
Similarly to the main findings AntMaze is not bounded by planning compute or methodology and rather is bounded by task and action-space complexity.
We refer to the discussions in Sec.~4.3 for the discussion on the AntMaze success rate.

\begin{table}[ht]
\centering
\small
\caption{Compute-matched comparison: LeWM success rate (\%) against
the number of CEM action candidates (default: 300), on Visual PointMaze Giant
and Visual AntMaze Giant. Bottom rows: EpicWorldModel (iMF, $\beta=0.4$)
at NFE~=~2 and NFE~=~4 for reference (from Tab.~\ref{tab:beta_zero}).
Best value per column in \textbf{bold}.}
\label{tab:compute_matched}
\begin{tabular}{l cc}
\toprule
CEM candidates & PointMaze & AntMaze \\
\midrule
300 (default)$^{\dagger}$ & 64.0 & 26.0 \\
600                 & 68.0 & \textbf{26.5} \\
900                 & 72.0 & 24.0 \\
1500                & 74.0 & \textbf{26.5} \\
6000                & 74.0 & 25.0 \\
\midrule
EpicWM (iMF, $\beta{=}0.4$, NFE{=}2) & 74.5 & --- \\
EpicWM (iMF, $\beta{=}0.4$, NFE{=}4) & \textbf{83.5} & --- \\
\bottomrule
\end{tabular}
\end{table}

\subsection{Number of Sampled Futures}
\label{app:c3_num_futures}

We further ablate how the number of sampled future states $N$ (distinct from the planning horizon $K$) affects planning performance.
In Tab.~\ref{tab:num_futures} we sweep $N \in \{2, 5, 10\}$ and at $\beta \in \{0, 0.4\}$ on the Visual PointMaze task for FM (NFE=10) and iMF (NFE=2) stochastic predictors.
Across all four settings the effect of $N$ is small, showing the robustness of the stochastic predictor.
Similar to the optimal selection of $\beta$ we see an automatic $N$ selection beyond a single default across tasks as future work.

\begin{table}[ht]
\centering
\small
\caption{Effect of the number of sampled futures $N$ on Visual PointMaze
Giant success rate (\%), for both predictor families (FM and iMF) at $\beta \in
\{0, 0.4\}$ (NFE fixed at each method's main-text default; App.~B). $^{*}$
marks $N{=}5$, the default value used in the main results. ``Range'' is
$\max - \min$ over the row. Best value per row in \textbf{bold}.}
\label{tab:num_futures}
\begin{tabular}{ll ccc c}
\toprule
Method & $\beta$ & $N=2$ & $N=5^{*}$ & $N=10$ & Range \\
\midrule
iMF & 0.0 & \textbf{78.0} & 76.5 & 77.5 & 1.5 \\
iMF & 0.4 & 80.0 & \textbf{83.5} & 78.0 & 5.5 \\
FM  & 0.0 & 79.0 & \textbf{82.0} & 81.0 & 3.0 \\
FM  & 0.4 & \textbf{81.5} & 81.0 & 81.0 & 0.5 \\
\bottomrule
\end{tabular}
\end{table}

\subsection{Latent-Space Analysis over All Evaluation Scenarios}
\label{app:c5_latent_analysis}

Section 4.6 shows an qualitative example of the exploration signal on a single rollout.
We show that the top-$k$ predictions in there latent space contract as the agent approaches the goal (Fig.~7; $1\sigma$ ellipse area shrinking from 60.99 to 11.86).
In addition to that, we provide a quantitative analysis over 50 evaluation scenarios.
Across all episodes the distance to the goal latent $\lVert z_t - z_{\mathrm{goal}} \rVert$ decreases from $17.4 \pm 5.1$ at $t{=}0$ to $10.4 \pm 5.8$ at $t{=}25$, corresponding to a median progress of 45.5\% (58.6\% on the 80\% of episodes that succeed).
Among all 1.500 sampled future predictions across these rollouts, the closest sample to the ground-truth future latent lies at a median distance of 3.5.
This is roughly five times smaller than the initial distance to the goal and further delivers a median cosine alignment of 0.88 between the predicted and ground-truth future directions. 
These results are consistent with the sampled futures tracking the ground-truth trajectory rather than producing arbitrary or uninformative spread, consistent with the exploration signal being useful for goal approach, as illustrated qualitatively in Fig.~7.


\end{document}